\def\bfx{{\bf x}}
\def\bfv{{\bf v}}
\def\bff{{\bf f}}
\def\nag{\nabla_{\Gamma}}
\def\delg{\Delta_{\Gamma}}
\def\S{\mathbb{S}}
\def\R{\mathbb{R}}
\def\calN{\mathcal{N}}
\def\bfn{\mbox{\boldmath$n$}}
\def\Ga{\Gamma}
\def\ol{\overline}
\def\pa{\partial}
\newcommand{\innp}[2]{\left\langle#1,#2\right\rangle}
\newtheorem{theorem}{Theorem}[section]
\newtheorem{lemma}{Lemma}[section]
\newtheorem{remark}{Remark}[section]
\newtheorem{definition}{Definition}[section]
\title{A Physics-Informed Neural Network Framework For Partial Differential Equations on 3D Surfaces: Time-Dependent Problems}
\author{
	Zhiwei Fang \thanks{University of Nevada, Las Vegas, 4505 S Maryland Pkwy, Las Vegas, NV 89154, USA ({\tt fangz1@unlv.nevada.edu}),}
	\and
	Justin Zhan \thanks{Department of Computer Science and Computer Engineering, University of Arkansas, Fayetteville, AR 72701, USA({\tt jzhan@uark.edu}),}
  \and
	Xiu Yang \thanks{Department of Industrial and Systems Engineering, Lehigh University, Bethlehem, PA 18015, USA ({\tt xiy518@lehigh.edu}).}
}
\date{}
\begin{document}
\maketitle
\begin{abstract}
In this paper, we show a physics-informed neural network solver for the
  time-dependent surface PDEs. Unlike the traditional numerical solver, no
  extension of PDE and mesh on the surface is needed. We show a simplified prior
  estimate of the surface differential operators so that PINN's loss value will
  be an indicator of the residue of the surface PDEs. Numerical experiments
  verify efficacy of our algorithm.
\end{abstract}

\section{Introduction}
Partial differential equations (PDEs) on manifolds has been widely used in
various areas. Especially, its application in image processing include mapping
an image on a given surface~\cite{turk1991}, recovering lost 
information~\cite{bertozzi2001}, and segmenting and deciphering 
images~\cite{tian2009,biddle2013}. Such techniques are widely used in biological and 
medical sciences, e.g., simulating animal coats~\cite{murray2003}, wound 
healing~\cite{olsen1998}, brain wrapping~\cite{toga1998}, lipid interactions in
membranes~\cite{elliott2010biomem}, and fluids in lungs~\cite{halpern1998}.

Different numerical methods have been developed to solve PDEs on
surfaces~\cite{dziuk2013finite}.
In 1988, G. Dziuk~\cite{dziuk1988fem} established the finite element scheme for 
the surface PDEs, including the weak formulation and finite element spaces. 
Then, an extension was introduced in~\cite{bertalmio2000}.
However, this method needs to extend both the PDEs and their solution to a 
neighborhood of the surface. 
A. Petras et al.~\cite{petras2016,petras2018rbffd,petras2019} report many works
on using the radial basis function finite difference (RBF-FD) method to solve
surface PDEs. This method is more efficient and easier to implement than finite
element methods. But one needs to adjust the shape parameters of radial basis
functions to balance the stability and accuracy.

Rather than the traditional numerical solvers, machine learning approaches 
attract more and more attentions.
In particular, physics-informed neural network (PINN)~\cite{pinn} has become a
popular method for solving PDEs and inverse problems. 
Further, the combination of PINN and adversarial networks can be
used for uncertainty quantification (UQ) problems in PDE~\cite{yang2019gan}. 
The finite element methods have been combined with the PINN to enhance its 
performance \cite{vpinn,hpvpinn}. 
One can even solve the PDEs without the concrete form of it as proposed in~\cite{raissi2018}.
NVIDIA also develops a scalable PINN solver
SIMNET~\cite{nvidia} 

In this work, we will proceed with PINN's work for surface PDEs~\cite{fang2019spde} and develop the PINN algorithm for time-dependent PDEs on the surface. We will show a simplified proof about a prior estimate of the surface differential operator. Continuous-time and discrete-time method have been studied for the surface PDEs. The numerical examples will verify our algorithm.

This paper is organized as follows. In section \ref{sec:spde}, we briefly introduce the mathematical preliminary of surface PDEs. We will show a prior estimate of the surface differential operators so that PINN's loss value will serve as an indicator of surface PDE's residue. PINNs for traditional PDEs and our algorithm for surface PDEs have been shown in section \ref{sec:pinn}. Then, a variety of numerical experiments have been shown to verify our algorithm in section \ref{sec:num}. We conclude this paper in section \ref{sec:con}.

\section{Partial Differential Equations on Surfaces}\label{sec:spde}
In this section, we briefly introduce the elementary definitions of PDEs on
surfaces. For more details about calculus on the manifolds, we refer interested
readers to~\cite{dziuk2013finite}.
\subsection{Differential operators on surfaces}
Let $ \Ga $ be a smooth surface embedded in $ \R^3 $ with unit normal $ \bfn $,
and $ u:\Ga\mapsto\R $ be a function on $ \Gamma $. Denoted by $ \ol{u} $ the
smooth extension of $ u:\Ga\mapsto\R $ to $ \ol{u}:U\mapsto\R $, where $ U $ is
a neighborhood of $ \Ga $, such that $ \ol{u}|_\Ga=u $ and $ \ol{u} $ is a
constant along $ \bfn $ for each point on $ \Ga $. We use $ \nabla $, $
\nabla\cdot $ and $ \Delta $ to denote the ordinary gradient, the divergence, and the Laplace operator in $ \R^3 $, respectively. Now, we can define the gradient, the divergence, and the Laplace operator on $ \Ga $ for $ u $, which are essential for PDE on the surface.
\begin{definition}
	Let $ u:\Ga\mapsto\R $ has a continuous derivative (of class $ C^1 $), and then the gradient of $ u $ on $ \Ga $ is defined as:
	\[
	\nag u = \nabla \ol{u}-\innp{\nabla\ol{u}}{\bfn}\bfn,
	\]
	where $ \innp{\cdot}{\cdot} $ means the inner product in $ \R^3 $.
\end{definition}
\begin{definition}
	Let $ u:\Ga\mapsto\R $ has a continuous derivative (of class $ C^1 $), and
	\[
	\nag u = (D_1u,D_2u,D_3u).
	\]
	Then, the divergence of vector $ \bfv=(v_1,v_2,v_3)\in(C^1)^3 $ on $ \Ga $ is defined by:
	\[
	\nag\cdot \bfv = D_1 v_1 + D_2 v_2 + D_3 v_3.
	\]
\end{definition}
\begin{definition}
	Let $ u:\Ga\mapsto\R $ have a second-order continuous derivative (of class $ C^2 $), then the Laplace operator on $ \Ga $ is defined as:
	\[
	\delg u = \nag\cdot\nag u.
	\]
	The Laplace operator on a surface is also known as the Laplace-Beltrami operator.
\end{definition}
\subsection{The equivalence principles}
Instead of solving surface the PDEs directly, we solve the ordinary PDEs on the surface with constraints. The idea of our method comes from the following lemma \cite{petras2018rbffd}:
\begin{lemma}\label{lem_equi}
	Let $ u $ be any function on $ \R^n $ that is constant along with normal
  directions of $ \Ga $. Then, at the surface, intrinsic gradients are
  equivalent to standard gradients:
	\[
	\nag u = \nabla u.
	\]
	Let $ \bfv $ be any vector field on $ \R^n $ that is tangent to $ \Ga $ and tangent to all surfaces displaced by a fixed distance from $ \Ga $. Then, at the surface,
	\[
	\nag\cdot \bfv = \nabla\cdot \bfv.
	\]
\end{lemma}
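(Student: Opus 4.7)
The plan is to treat the two equalities separately, using the definition of $\nag$ given above together with the hypothesis on how the objects are extended off of $\Ga$.

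For the first identity, I would observe that when $u$ is already constant along normal directions of $\Ga$, the canonical extension $\ol{u}$ coincides with $u$ on a tubular neighborhood $U$ of $\Ga$, so $\ol{u}=u$ throughout $U$. The hypothesis then says that the directional derivative of $u$ in the direction $\bfn$ vanishes, which is exactly $\innp{\nabla u}{\bfn}=0$. Substituting this into the definition
\[
\nag u = \nabla\ol{u}-\innp{\nabla\ol{u}}{\bfn}\bfn
\]
immediately collapses the formula to $\nag u=\nabla u$ on $\Ga$.

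For the second identity, the routine calculation is to expand componentwise: by the definitions of $\nag\cdot\bfv$ and of $\nag v_j$,
\[
\nag\cdot\bfv = \sum_{j=1}^3 (\nag v_j)_j = \sum_{j=1}^3 \pa_j v_j - \sum_{j=1}^3 n_j \innp{\nabla v_j}{\bfn} = \nabla\cdot\bfv - \sum_{i,j} n_i n_j \pa_i v_j,
\]
so the task reduces to showing that the double sum $\sum_{i,j} n_i n_j\pa_i v_j$ vanishes on $\Ga$. This is where the tangency hypothesis on displaced surfaces is essential: extending $\bfn$ inside the tubular neighborhood as the gradient of the signed distance function to $\Ga$ gives $\pa_{\bfn}\bfn=0$ (since $|\nabla d|^2=1$ implies $(\bfn\cdot\nabla)\bfn=0$), and the tangency of $\bfv$ to every level surface of $d$ says $\innp{\bfv}{\bfn}\equiv 0$ throughout the tubular neighborhood. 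Differentiating this identity in the direction $\bfn$ and using $\pa_{\bfn}\bfn=0$ leaves precisely $\sum_{i,j} n_i n_j \pa_i v_j = 0$, which finishes the computation.

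The main obstacle, as I see it, is not algebraic but conceptual: one has to be careful that both $\bfn$ and $\bfv$ are genuinely defined, and differentiable, on a full neighborhood of $\Ga$ rather than only on $\Ga$ itself, so that directional derivatives along $\bfn$ make sense. Once the extension of $\bfn$ via the signed distance function is in place and the tangency hypothesis is interpreted as holding on the whole tubular neighborhood (not just pointwise on $\Ga$), both parts reduce to a one-line application of the chain/product rule, so I would spend most of the write-up making these extension conventions precise.
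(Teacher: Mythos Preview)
Your argument is correct. The first identity is indeed immediate from the definition once one notes that constancy along normals forces $\innp{\nabla u}{\bfn}=0$. For the second identity your reduction to showing $\bfn^\top(\nabla\bfv)\bfn=0$ is the right computation, and your use of the signed distance function to extend $\bfn$ so that $(\bfn\cdot\nabla)\bfn=0$, together with differentiating the identity $\innp{\bfv}{\bfn}\equiv 0$ along $\bfn$, is exactly the clean way to kill that term. Your remark about needing $\bfn$ and $\bfv$ to be smoothly defined on a full tubular neighborhood is also well taken and worth stating explicitly.

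As for comparison with the paper: the paper does not actually prove this lemma. It is quoted from \cite{petras2018rbffd} and used as a black box to motivate replacing $\nag$ and $\delg$ by their Euclidean counterparts under suitable constraints. So there is no proof in the paper to compare against; your write-up would in fact supply what the paper omits.
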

Therefore, by using this lemma, we may replace the surface gradient $ \nag u $
with $ \nabla u $ under the constraint $ \innp{\nabla u}{\bfn}=0 $. Then, the Laplace-Beltrami operator $ \delg u $ can be replaced by $ \Delta u $ with constraint $ \innp{\nabla u}{\bfn}=0 $. This is because under the constraint $ \innp{\nabla u}{\bfn}=0 $, $ \nabla u $ is a vector field tangent to $ \Ga $. Hence, 
\[
\delg u = \nag \cdot\nag u=\nag\cdot\nabla u=\nabla\cdot\nabla u =\Delta u,
\]
if we extend $ u $ as a constant along the $ \bfn $.

The original idea of PINNs is to set up a loss function by using the residue of
the PDE \cite{pinn}. When the loss function approaches zero, the residue of the
PDE goes to zero, which implies taht the PDE is satisfied on training points
approximately. However, we will show that this is not true for surface PDEs. 
Therefore, we develop a new algorithm in~\cite{fang2019spde} to solve this
problem. The following prior estimate will be used to improve our algorithm in
the next sections.
\begin{theorem}\label{thm}
	Let $ \Ga\subset \R^3 $ be a smooth surface whose mean curvature $ |H|<\infty $, $ \bfv $ and $ \bff $ be two vector fields on $ \Ga $, and $ u $ and $ g $ are two scalar functions on $ \Ga $, then we have
	\begin{eqnarray}
	\|\nag u-\bff\|	&\leq& \|\nabla \ol{u} - \ol{\bff}\|+\|\innp{\bfn}{\nabla \ol{u}}\|,\label{est_nag}\\
	\|\nag \cdot\bfv-g\|	&\leq& \|\nabla \cdot\ol{\bfv} - \ol{g}\|+\|\bfn^\top \nabla \ol{\bfv}\bfn\|,\label{est_div}\\
	\|\delg u - g\|	&\leq& C(\|\Delta \ol{u} -\ol{g}\|+\|\innp{\bfn}{\nabla \ol{u}}\|+\|\bfn^\top\nabla^2 \ol{u} \bfn\|)\label{est_del}
	\end{eqnarray}
	on $ \Ga $, where $ C=\max(1,2|H|) $ is a constant, column vector $ \bfn $ is the unit normal on $ \Ga $, $ \|\cdot\| $ is the $ L^2 $ norm for the scalar or the vector functions, and $ \nabla^2 $ is the Hessian operator. The overline means the $ \R^3 $ smooth extension as mentioned above.
\end{theorem}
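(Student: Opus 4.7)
The plan is to establish, for each of the three inequalities, a pointwise identity on $\Ga$ expressing the surface differential operator as the ambient $\R^3$ operator applied to the extended function, minus a correction involving only normal-direction derivatives. Once such identities are in hand, subtracting the target right-hand side ($\bff$, $g$, or $g$) and applying the triangle inequality in $L^2(\Ga)$ produces each estimate. Throughout I would use that $\ol{u}|_\Ga=u$, $\ol{\bfv}|_\Ga=\bfv$, $\ol{g}|_\Ga=g$, $\ol{\bff}|_\Ga=\bff$, so the data on $\Ga$ and its extension agree when evaluated pointwise.

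For \eqref{est_nag}, I would start directly from the definition $\nag u = \nabla\ol{u}-\innp{\nabla\ol{u}}{\bfn}\bfn$. Subtracting $\bff=\ol{\bff}|_\Ga$ and using the triangle inequality gives the bound, after noting that $\|\innp{\nabla\ol{u}}{\bfn}\bfn\|_{L^2(\Ga)}=\|\innp{\nabla\ol{u}}{\bfn}\|_{L^2(\Ga)}$ because $\bfn$ is a unit vector. For \eqref{est_div}, I would derive the pointwise identity
\[
\nag\cdot\bfv \;=\; \nabla\cdot\ol{\bfv} \;-\; \bfn^\top\nabla\ol{\bfv}\,\bfn
\]
by writing each tangential derivative as $D_i = \pa_i - n_i(\bfn\cdot\nabla)$ (projection of the ambient gradient onto the tangent plane) and summing the diagonal entries; the correction collects the normal-normal block of $\nabla\ol{\bfv}$. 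Subtracting $g$ and applying the triangle inequality in $L^2(\Ga)$ then yields the claim.

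The third estimate is the substantive step. I would establish the Laplace-Beltrami decomposition
\[
\Delta\ol{u} \;=\; \delg u \;+\; H\,\innp{\bfn}{\nabla\ol{u}} \;+\; \bfn^\top\nabla^2\ol{u}\,\bfn
\]
on $\Ga$, where $H$ is the mean curvature (up to the factor-of-two convention that produces the constant $C=\max(1,2|H|)$ in the final bound). This can be proved either by applying the divergence identity of the previous step to the tangential vector field $\nabla\ol{u}$ on the offset surfaces $\{x+t\bfn\}$ and differentiating the area element in $t$ (whose logarithmic derivative at $t=0$ is exactly $-H$), or by invoking it as a standard formula from \cite{dziuk2013finite}. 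Rearranging this identity, subtracting $\ol{g}$, taking absolute values, and using the triangle inequality in $L^2(\Ga)$ yields
\[
\|\delg u - g\| \;\leq\; \|\Delta\ol{u}-\ol{g}\| + |H|\,\|\innp{\bfn}{\nabla\ol{u}}\| + \|\bfn^\top\nabla^2\ol{u}\,\bfn\|,
\]
which is absorbed into the stated constant $C=\max(1,2|H|)$.

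The main obstacle is the mean-curvature identity underlying \eqref{est_del}: it requires one to correctly identify the trace of the shape operator arising when one differentiates a tangential vector field across the foliation by parallel surfaces, and to keep the constant in front of $\innp{\bfn}{\nabla\ol{u}}$ consistent with the stated $C$. By contrast, \eqref{est_nag} is essentially the definition and \eqref{est_div} is a bookkeeping calculation, so once the Laplace-Beltrami decomposition is justified the remaining steps are purely triangle-inequality estimates.
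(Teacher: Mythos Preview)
Your plan for \eqref{est_nag} and \eqref{est_div} is exactly what the paper does: write the surface operator as the ambient one minus a normal correction, then apply the triangle inequality. For \eqref{est_del} your target identity is also the one the paper establishes, namely
\[
\delg u \;=\; \Delta\ol{u} - \bfn^\top\nabla^2\ol{u}\,\bfn + 2H\,\innp{\bfn}{\nabla\ol{u}},
\]
but the derivations diverge. You propose either a foliation/first-variation-of-area argument or an appeal to the literature. The paper instead stays purely at the level of vector calculus: it plugs $\nag u=\nabla\ol{u}-\bfn\innp{\bfn}{\nabla\ol{u}}$ into the surface-divergence identity from step~2 and then reduces the two correction terms using only the product rule together with $\nabla\cdot\bfn=-2H$, $\bfn^\top\bfn=1$, and $\bfn^\top(\nabla\bfn)\bfn=0$ (the shape operator is tangential). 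The two $\innp{\bfn}{\nabla\innp{\bfn}{\nabla\ol{u}}}$ contributions cancel, leaving precisely the displayed identity with the factor $2H$, which explains the constant $C=\max(1,2|H|)$ without any convention ambiguity. Your route would also work, but the paper's computation is shorter, self-contained in $\R^3$, and avoids the offset-surface machinery you flag as the main obstacle.
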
 
\begin{proof}
	\begin{enumerate}
		\item 
		By using the definition of $ \nag $, we have
		\begin{align*}
		\|\nag u-\bff\|	&=\|\nabla \ol{u} - \bfn\innp{\bfn}{\nabla \ol{u}}-\ol{\bff}\|\\
		&\leq \|\nabla \ol{u} -\ol{\bff}\|+\|\bfn\innp{\bfn}{\nabla \ol{u}}\|\\
		&\leq \|\nabla \ol{u} -\ol{\bff}\|+\|\innp{\bfn}{\nabla \ol{u}}\|.
		\end{align*}
		This proves (\ref{est_nag}).
		\item 
		By using the definition of $ \nag \cdot $, we have
		\begin{align*}
		\|\nag \cdot\ol{\bfv}-\ol{g}\|	&=\|\nabla\cdot\ol{\bfv} - \bfn^\top \nabla\ol{\bfv}\bfn - \ol{g}\|\\
		&\leq \|\nabla\cdot\ol{\bfv} - \ol{g}\|+\|\bfn^\top \nabla\ol{\bfv}\bfn\|.
		\end{align*}
		This proves (\ref{est_div}).
		\item
		By using the definition of $ \delg $, we have
		\begin{align*}
		\delg u - g	&=\nag\cdot\nag u - g\\
		&=\nabla\cdot(\nabla \ol{u}-\bfn\innp{\bfn}{\nabla \ol{u}})-\bfn^\top \nabla(\nabla \ol{u}-\bfn\innp{\bfn}{\nabla \ol{u}})\bfn-\ol{g}\\
		&=\Delta \ol{u} -\nabla\cdot(\bfn\innp{\bfn}{\nabla \ol{u}})-\bfn^\top \nabla^2 \ol{u} \bfn+\bfn^\top \nabla(\bfn\innp{\bfn}{\nabla \ol{u}})\bfn-\ol{g}.
		\end{align*}
		Note that, by the vector product rule, and $ \nabla\cdot\bfn =-2H $,
		\begin{align*}
		\nabla\cdot(\bfn\innp{\bfn}{\nabla \ol{u}})	&=\innp{\bfn}{\nabla\innp{\bfn}{\nabla \ol{u}}} + \innp{\bfn}{\nabla \ol{u}}(\nabla\cdot \bfn)\\
		&=\innp{\bfn}{\nabla\innp{\bfn}{\nabla \ol{u}}} -2H \innp{\bfn}{\nabla \ol{u}}.
		\end{align*}
		And,
		\begin{align*}
		\bfn^\top\nabla(\bfn\innp{\bfn}{\nabla \ol{u}})\bfn	&=\bfn^\top(\nabla\innp{\bfn}{\nabla \ol{u}}\bfn^\top + \innp{\bfn}{\nabla \ol{u}}\nabla \bfn)\bfn\\
		&=\bfn^\top\nabla\innp{\bfn}{\nabla \ol{u}}\bfn^\top\bfn + \bfn^\top\innp{\bfn}{\nabla \ol{u}}\nabla \bfn\bfn\\
		&=\bfn^\top\nabla\innp{\bfn}{\nabla \ol{u}} + \innp{\bfn}{\nabla \ol{u}}\bfn^\top\nabla \bfn\bfn\\
		&=\innp{\bfn}{\nabla\innp{\bfn}{\nabla \ol{u}}}.
		\end{align*}
		Here, we use the fact that $ \bfn^\top\bfn=1 $ and $ \bfn^\top\nabla \bfn\bfn=0 $. The second equality is because the symmetric curvature tensor $ \nabla\bfn $ is generated by the basis of $ \Ga $ hence is orthogonal to $ \bfn $. Combining these together, we have
		\begin{align*}
		\delg u - g	&=\Delta \ol{u} -\nabla\cdot(\bfn\innp{\bfn}{\nabla \ol{u}})-\bfn^\top \nabla^2 \ol{u} \bfn+\bfn^\top \nabla(\bfn\innp{\bfn}{\nabla \ol{u}})\bfn-\ol{g}\\
		&=\Delta \ol{u} - \ol{g} - \bfn^\top\nabla^2\ol{u}\bfn - \innp{\bfn}{\nabla\innp{\bfn}{\nabla \ol{u}}} +2H \innp{\bfn}{\nabla \ol{u}}+\innp{\bfn}{\nabla\innp{\bfn}{\nabla \ol{u}}}\\
		&=\Delta \ol{u} - \ol{g} - \bfn^\top\nabla^2\ol{u}\bfn+2H \innp{\bfn}{\nabla \ol{u}}.
		\end{align*}
		Therefore, 
		\begin{align*}
		\|\delg u - g\|	&\leq \|\Delta \ol{u} - \ol{g}\|+\|\bfn^\top\nabla^2\ol{u}\bfn\|+2|H|\|\innp{\bfn}{\nabla \ol{u}}\|\\
		&\leq \max(1,2|H|)(\|\Delta \ol{u} - \ol{g}\|+\|\bfn^\top\nabla^2\ol{u}\bfn\|+\|\innp{\bfn}{\nabla \ol{u}}\|).
		\end{align*}
		This proves (\ref{est_del})
	\end{enumerate}
\end{proof}
\begin{remark}
	We point out that there is a more general result about (\ref{est_del}) has been proved at \cite{cheung2018surfacepde} (Theorem 2.1 and Corollary 2.2). Our result presents above provide a much easier proof in $ \R^3 $, which can be extended to high order derivative cases.
	
	Additionally, if the $ \bfv $ is a constant vector along normal, we have $ \nag\cdot\bfv = \nabla\cdot \bfv $, which is the equivalent principle. However, if we only have $ \bfv\cdot\bfn =0 $, then 
	\[
	\nag \cdot\bfv =\frac{1}{|\nabla\phi|}\nabla\cdot(|\nabla\phi|\bfv)\mbox{ on }\Ga,
	\]
	where $ \Ga=\{\bfx\in\R^3|\phi(\bfx)=0 \} $. This result has been proved in \cite{deck2010hnarrow}.
\end{remark}

\section{PINNs for time-evolving PDEs on surfaces}\label{sec:pinn}
As shown in \cite{pinn,raissi2018}, the PINN can solve a large class of PDEs. In
this section, we first recap the PINN method for PDEs. Then, we propose a new 
algorithm to solve time-dependent PDEs on the surface.

\subsection{PINNs for classic PDEs}
Let $ \calN[\cdot] $ be a general differential operator defined on $
\Omega\subset \R^d $, and $ \Ga_b $ is a union of (inner or outer) boundaries of
$ \Omega $. We consider the following PDE on $ \Omega $:
\begin{equation}\label{pde1}
\begin{cases}
\calN[u(\bfx)]	&=f(\bfx)\quad\mbox{ in }\Omega,\\
u(\bfx)	&=u_b(\bfx)\quad\mbox{ on }\Ga_b,
\end{cases}
\end{equation}
where $ u(\bfx) $ is the solution of PDE, $ \bfx=(x_1,x_2,\cdots,x_d) $ is the
variable of $ u $, and $ f $ and $ u_b $ are the given right-hand side term and
the boundary condition, respectively. 
As suggested in \cite{pinn,raissi2018}, we use a fully connected neural network
(NN) with $ N_l $ hidden layers to solve Eq.~\eqref{pde1}. Each hidden layer 
consists of $ N_e(l) $ neurons, where $l$ is the index of the hidden layer.
The input layer consists of $ d $ neurons $ x_1, x_2, \cdots, x_d $
which are the variables of the solution $ u $, and the output layer consists of
one neuron $ u $.

After establishing an NN with input $ \bfx=(x_1,x_2,\cdots,x_d) $ and output
$ u $, we need to set up a loss function to train it. Here, we denote the 
prediction of the NN by $ u_h $. Since we want $ u_h $ to satisfy 
Eq.~\eqref{pde1}, the loss function can be defined by the following mean 
square error (MSE):
\begin{align*}
MSE	&=MSE_u+MSE_b\\
&=\frac{1}{N_u}\sum_{i=1}^{N_u}|\calN[u_h(\bfx_u^i)]-f_u^i|^2+\frac{1}{N_b}\sum_{i=1}^{N_b}|u_b^i-u_h(\bfx_b^i)|^2.
\end{align*}
Here, $ \{\bfx_u^i,f_u^i\} $ are collocation points in $\Omega$,
$ \{\bfx_b^i,u_b^i\} $ are training data on $\Gamma_b$, 
and $ N_u $ and $ N_b $ are the sizes of these points, respectively. 
The term $ \calN[u_h(\bfx_u^i)] $ can be computed through the
automatic differentiation technique \cite{ad}. In Tensorflow, this can be
performed using the function \texttt{tf.gradient}. Ideally, $ MSE=0 $, which
indicates taht $ u_h $ satisfies Eq.~\eqref{pde1} exactly. Given these two sets of
collocation points, we may train our NN for $ u $ and then get the prediction for
each point in $ \Omega $, 
and this NN is called PINN. 

\subsection{The loss function for surface PDEs}
In this subsection, we will use Theorem \ref{thm} to refine the loss function we have used in \cite{fang2019spde}.
We begin with the time-independent problem $ \delg u = f $. In \cite{fang2019spde}, the loss function is given by
\begin{equation}
MSE=\frac{1}{N_u}\sum_{i=1}^{N_u}|\Delta u_h(\bfx_u^i)-f_u^i|^2+\frac{1}{N_u}\sum_{i=1}^{N_u}|\innp{\nabla u_h(\bfx_u^i)}{\bfn(\bfx_u^i)}|^2,\label{loss1}
\end{equation}
where $ \{\bfx_u^i\}_{i=1}^{N_u} $ are the training points on $ \Ga $, and $ u_h
$ is the solution by the PINN. However, Theorem \ref{thm} shows that this is not a
good choice as we discussed in Section 2.
Instead, Theorem \ref{thm} motivates us to set the loss function as 
\begin{equation}
MSE=\frac{1}{N_u}\sum_{i=1}^{N_u}|\Delta u_h(\bfx_u^i)-f_u^i|^2+\frac{1}{N_u}\sum_{i=1}^{N_u}|\innp{\nabla u_h(\bfx_u^i)}{\bfn(\bfx_u^i)}|^2+\frac{1}{N_u}\sum_{i=1}^{N_u}|\bfn(\bfx_u^i)^\top\nabla^2u_h\bfn(\bfx_u^i)|^2.\label{loss2}
\end{equation}
With this setting, the residue of the surface PDE approaches zero as $ MSE\to 0 $.
We note that, in practice, Eq.\eqref{loss1} works for most of the smooth surfaces, 
and Eq.\eqref{loss2} provides an indicator of PDEs' residue.

Now, we can handle the spatial derivative such as $ \nag $, $ \nag\cdot $, and $
\delg $. The next step is to consider the operator $ \pa_t $. 
We follow the continuous-time and discrete-time method for time-dependent PDE
in~\cite{pinn} but mofidy the spatial derivatives based on Theorem 2.1.

\subsection{Continuous-time PINN for surface PDE}
The most natural idea to solve the time-dependent problem is to deal with the time derivatives 
in the same manner as the spatial variable. Let us consider the following time-dependent surface PDE problem
\begin{eqnarray}
\pa_t u(\bfx,t) &=&\calN_\Ga[u(\bfx,t)]\mbox{ for }(\bfx,t)\in\Ga\times [0,T],\label{eqn2}\\
u(\bfx,0)	&=&u_0(\bfx)	\mbox{ for }\bfx\in\Ga,\label{ini}
\end{eqnarray}
where $ \Ga\subset \R^3 $ is a closed surface, and $ \calN_\Ga $ is a spatial
differential operator on $ \Ga $, consisting of $ \delg $, $ \nag\cdot $ and $
\nag $. Although this assumption on $ \calN_\Ga $ does not include all cases
of the surface differential operators, it covers the most scenarios we focus on.
For more complicated operators, such as $ \delg^2 $, we may develop a similar 
result, as shown in Theorem~\ref{thm}, and then establish the corresponding loss
function accordingly. The key point is that the loss function should be an
indicator for the residue of the surface PDE.

Base on the Theorem~\ref{thm}, we define the following loss function for (\ref{eqn2}):
\begin{eqnarray}
MSE	&=&\frac{1}{N_u}\sum_{i=1}^{N_u}\left|\pa_t u_h(\bfx_u^i,t_u^i)-\calN_\Ga[u(\bfx_u^i,t_u^i)]\right|^2+\frac{1}{N_u}\sum_{i=1}^{N_u}\left|\innp{\nabla u_h(\bfx_u^i,t_u^i)}{\bfn(\bfx_u^i)}\right|^2\nonumber\\
&&+\frac{1}{N_u}\sum_{i=1}^{N_u}\left|\bfn(\bfx_u^i)^\top\nabla^2u(\bfx_u^i,t_u^i)\bfn(\bfx_u^i)\right|^2+\frac{1}{N_0}\sum_{i=1}^{N_0}\left|u_h(\bfx_0^i,0)-u_0(\bfx_0^i)\right|^2,\label{loss_sur}
\end{eqnarray}
where $ \{\bfx_u^i,t_u^i\}_{i=1}^{N_u} $ are collocation points for the PDE, 
$\{\bfx_0^i\}_{i=1}^{N_0} $ are collocation points for the initial data, and
$\bfn$ is the unit outer normal on $ \Ga $. According to Theorem~\ref{thm}, 
Eqs.~\eqref{eqn2} and \eqref{ini} are enforced by minimizing
Eq.\eqref{loss_sur}.

There are several methods to generate collocation points on $ \Ga\times [0,T] $.
In the numerical examples shown below, we used the following two methods. If the
surface is defined by implicit equations and homeomorphic to sphere, for
example, $ \frac{x^2}{2}+y^2+z^2=1 $, we may generate Fibonacci lattice on the
unit sphere, and then map them to the target surface. We then obtain collocation
points on $ \Ga $. Next, we choose a set of partition points on $ [0,T] $, and the
collocation points $ \{\bfx_u^i,t_u^i\}_{i=1}^{N_u} $ are given by the tensor 
product of the collocation points on $ \Ga $ and the partition points on $ [0,T] $. If the surface is given by the parametric form, we may use the Latin hypercube sampling (LHS) to generate a set of points $ (\alpha_i,\beta_i, t_i)_{i=1}^{N_u} $. Next, we map $ (\alpha_i,\beta_i) $ to the surface through the parametric equation of the surface. Then, we get the collocation points $ \{\bfx_u^i,t_u^i\}_{i=1}^{N_u} $.

We can predict the solution for any  $ (\bfx,t)\in\Ga\times [0,T] $ by training this PINN. But the training set is large because it includes all the data between $ [0,T] $. This issue can be fixed by the discrete-time method.

\subsection{Discrete-time PINN for surface PDE}
Alternatively, the discrete-time method first discretize the time derivative by
using high order Runge-Kutta methods. Then the PINN provides a sequence of 
function evaluations at different time partition points.
We use Eq.~\eqref{eqn2} to describe the entire procedure.
We follow the notations in~\cite{pinn}, and the general form of $ q $ stages
Runge-Kutta scheme of (\ref{eqn2}) is given by
\begin{equation}
  \label{eq:rk}
\begin{aligned}
u^{n+c_i}	&=u^n-\Delta t\sum_{j=1}^qa_{ij}\calN[u^{n+c_j}],\qquad i=1,\cdots ,q,\\
u^{n+1}		&=u^n-\Delta t\sum_{j=1}^q b_j\calN[u^{n+c_j}].
\end{aligned}
\end{equation}
Here $ u^{n+c_j}(\bfx)=u(\bfx,t^n+c_j\Delta t) $, is the hidden state of the
system at time $ t^n+c_j\Delta t $ for $ j=1,\cdots,q $. In this problem, we
set $ t^n=0 $ and $ t^{n+1}=T $. 
We proceed by placing a multi-output neural network prior with input $ \bfx $ and output
\[
(u^{n+c_1},\cdots,u^{n+c_q},u^{n+1}).
\]
The theoretical error estimate implies that the temporal error of this method is
$ O(\Delta t^{2q}) $. Therefore, the prerequisite of this method is that $
\Delta t<1 $, otherwise the solution may not converge. 
For $ T\geq 1 $, we can set a reference time $ 0<\widetilde{T}<1 $, and apply the transform $ \widetilde{t}=\frac{t}{T}\widetilde{T} $. The resulting PDE is
\begin{equation}\label{eqn_trans}
\pa_{\widetilde{t}}u(\bfx,\widetilde{t})=\frac{T}{\widetilde{T}}\calN_\Ga[u(\bfx,\widetilde{t})].
\end{equation}
Then we can use the same method to handle it.

To design the loss function, we use the following notation:
\begin{align}
u_i^n	&=u^{n+c_i}+\Delta t\sum_{j=1}^qa_{ij}\calN[u^{n+c_j}],\qquad i=1,\cdots,q,\label{rk3}\\
u_{q+1}^n	&=u^{n+1}+\Delta t\sum_{j=1}^qb_j\calN[u^{n+c_j}].\label{rk4}
\end{align}
Therefore, given the collocation points $ \{\bfx_u^i,t_u^i\}_{i=1}^{N_u} $ as mentioned above, the loss function is given by
\[
  MSE=\frac{1}{N_u}\sum_{j=1}^{q+1}\sum_{i=1}^{N_u}\left(|u_j^n(\bfx^i)-u_0(\bfx^i)|^2+|\innp{\nabla u_j^n(\bfx^i)}{\bfn(\bfx^i)}|^2+|\bfn(\bfx^i)^\top\nabla^2u_j^n(\bfx^i)\bfn(\bfx^i)|^2\right).
\]
By training this PINN, we will get the solution of (\ref{eqn2})-(\ref{ini}) not only at the final time but also some intermediate time points. In this case, we cannot get the predicted solution directly for any given time $ t\in[0,T] $ other than the Runge Kutta nodes. But we can use the interpolation method to recover it. Compared with the continuous-time method, this method saves time because we have fewer training points than the continuous-time method.

\section{Numerical Experiments}\label{sec:num}
In this section, we present numerical results using continuous- and discrete-time
approaches to demonstrate the efficacy of our PINN method for surface PDE. All
numerical experiments were conducted by using Tensorflow 1.13 on DGX-2 with 
NVIDIA Tesla V100. For the continuous-time method, we set up a PINN of $4$ 
hidden layers with $100$ neurons each layer, and $1$ neuron for the output 
layer. For the discrete-time method, we set up a PINN of $4$ hidden layers, 
with $200$ neurons each layer and $q+1$ neurons for the output layer, where $q$
stages Runge-Kutta method has been used. We choose $\sigma(s)=\sin(\pi s)$ as
the activation function~\cite{fang2019spde,fang2019meta}. 

\subsection{Continuous-Time Method}
Consider the following heat equation
\begin{eqnarray}
\pa_tu(\bfx,t)	&=&	\delg u(\bfx,t) + f(\bfx,t)\mbox{ for }(\bfx,t)\in\Ga\times[0,T],\label{num_eqn}\\
u(\bfx,0)	&=&	u_0(\bfx)		\mbox{ for } \bfx\in\Ga.\label{num_ini}
\end{eqnarray}
We first test the accuracy of the continuous-time method. To this end, we set $
u(\bfx,t)=x_1\sin(tx_2)+x_3 $ as the exact solution, where $ \bfx=(x_1,x_2,x_3)
$ and $ \Ga=\S^2 $ is the unit sphere. We choose a suitable $ f $ to make the
PDE hold, and set $ u_0(\bfx)=u(\bfx,0) $. To evaluate the performance of our
algorithm, we define the relative error of prediction $ u_h $ by PINNs. Let
$ u $ be the exact solution, and $ \{\bfx_c^i\}_{i=1}^{N_c} $ be a set of sample
points for the accurcy test, which differs from training set 
$ \{\bfx_u^i\}_{i=1}^{N_u} $. Then, the relative error $Err$ is given by
\[
Err = \frac{\sqrt{\sum_{i=1}^{N_c}|u_h(\bfx_c^i)-u(\bfx_c^i)|^2}}{\sqrt{\sum_{i=1}^{N_c}|u(\bfx_c^i)|^2}}.
\]
We generate $ 50,000 $ collocation points on $ \Ga\times [0,1] $ by using the tensor product of Fibonacci lattice, as mentioned above. 

Figure~\ref{fig:con_error} shows the results at different time. We can see that 
the relative error is around $ 10^{-2} $. Of note, in this example, if
we drop out the $\bfn\nabla^2u\bfn$ term in the loss function, the PINN will converge
slightly faster. This is probably because $ \Ga $ is a very smooth surface with
a constant curvature. This also enlightens us to adjust the weight of this second
order term in the loss function, which is currently $ 1 $. However, the pattern of 
how this weight affects the convergence is unclear at this time and will remain 
as future work.
\begin{figure}[thbp]
	\centering
  \subfigure[Time: $0.25$, $Err=3.098151e-02$]{
			\includegraphics[width=0.48\textwidth]{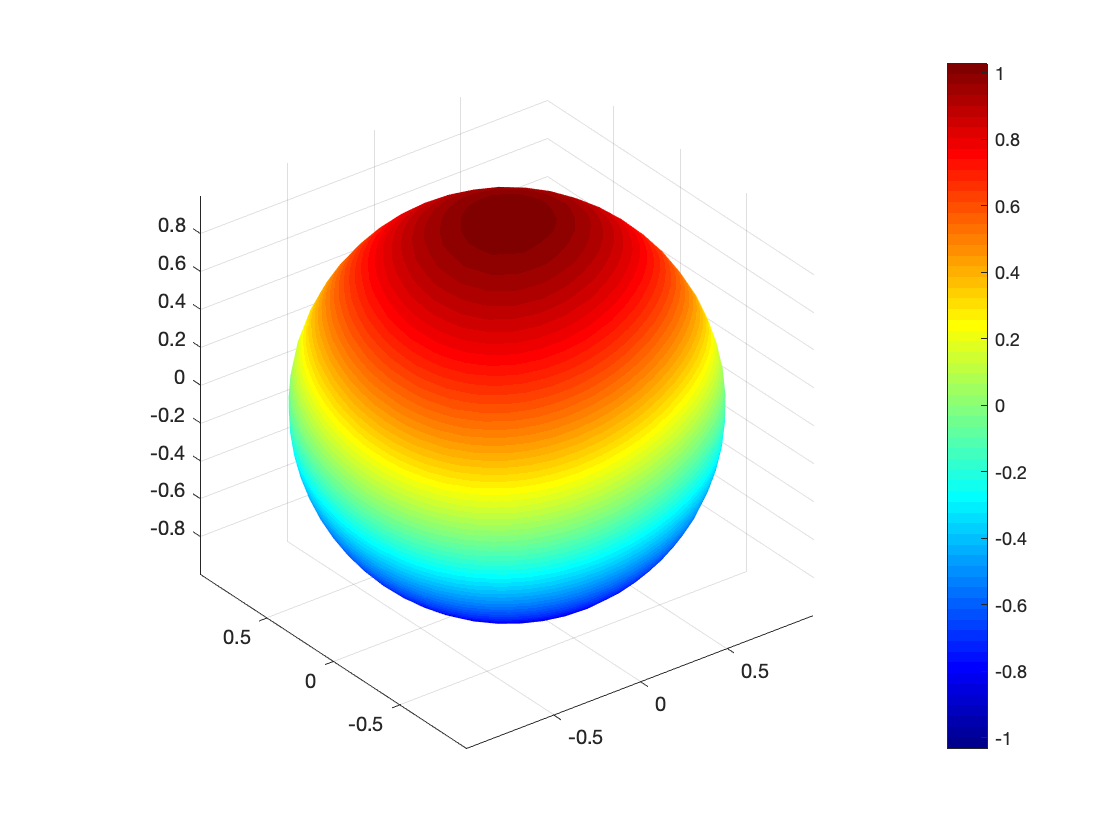}}
  \subfigure[Time: $0.5$, $Err=5.168469e-02$]{
			\includegraphics[width=0.48\textwidth]{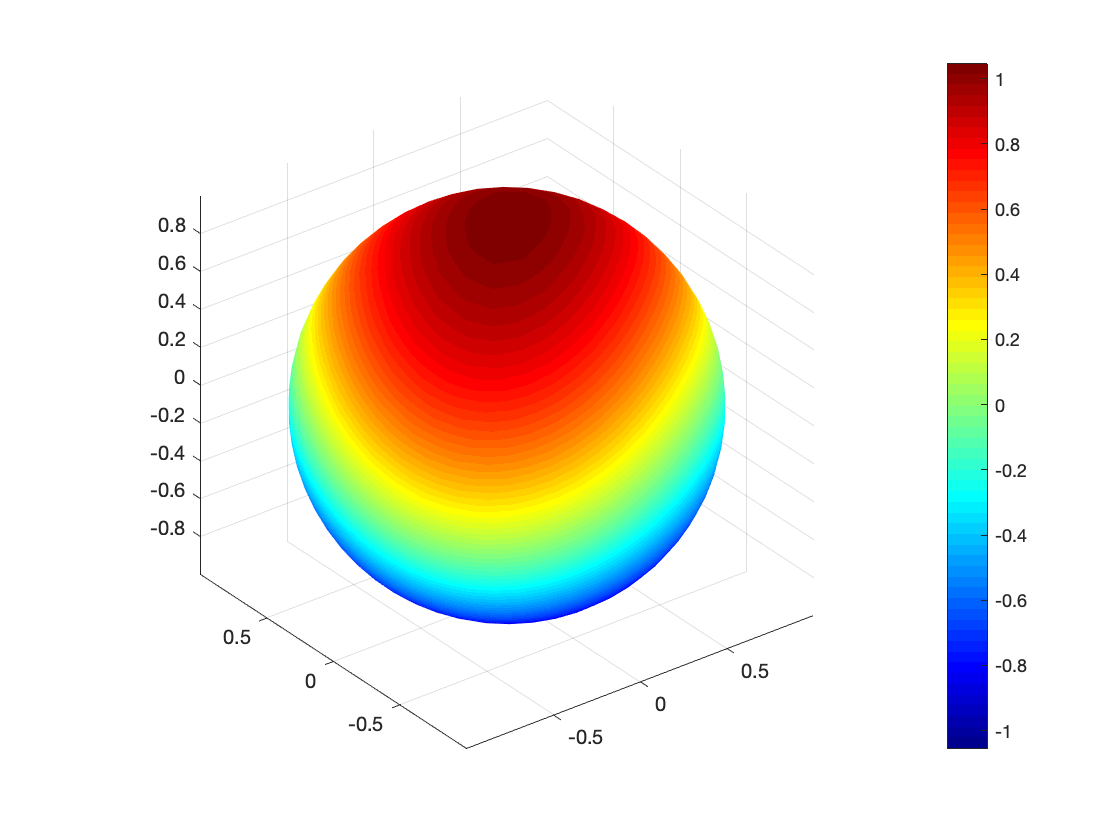}} \\
  \subfigure[Time: $0.75$, $ Err=6.373482e-02$]{
			\includegraphics[width=0.48\textwidth]{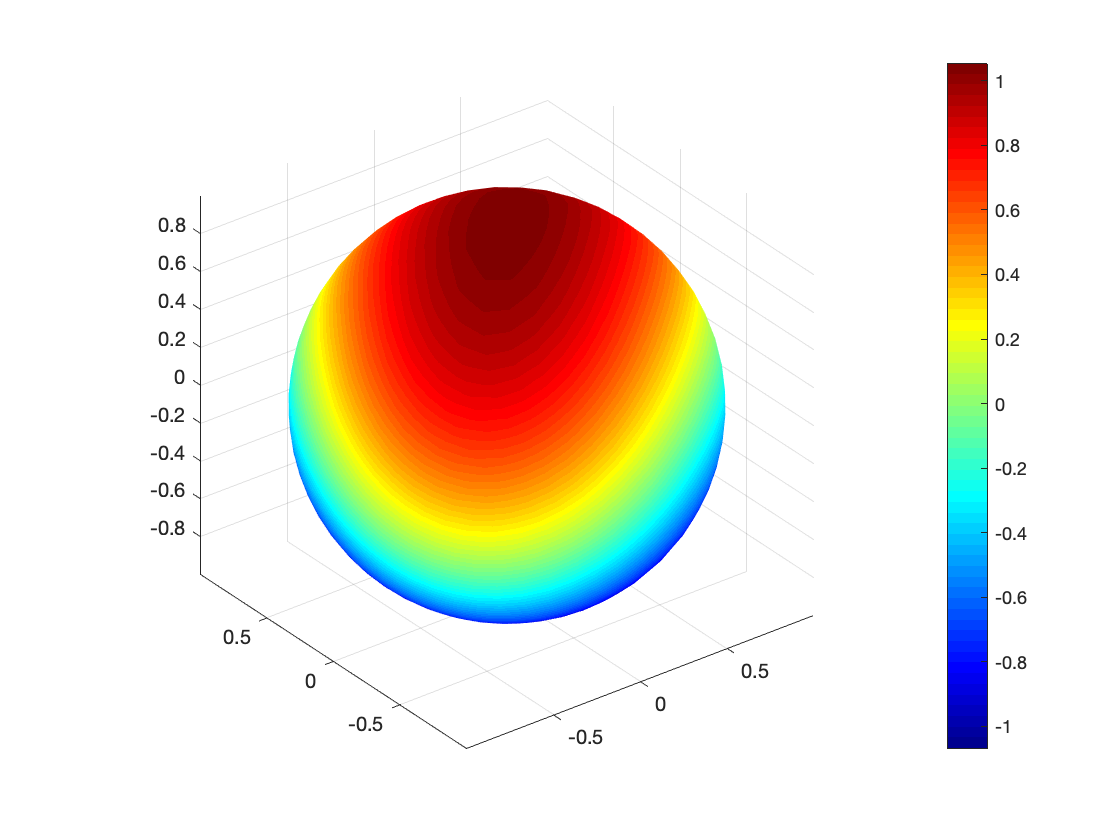}}
  \subfigure[Time: $ 1.0 $, $ Err=6.761617e-02$]{
      \includegraphics[width=0.48\textwidth]{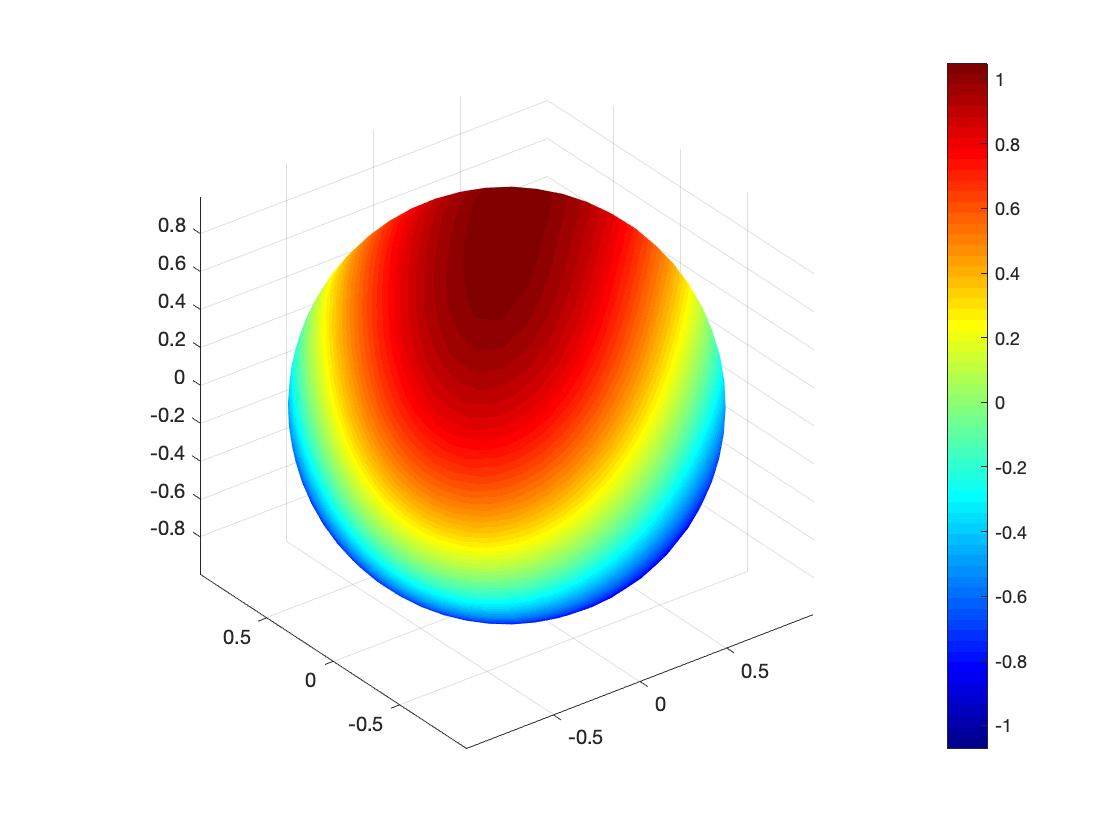}}
	\caption{Error of continuous-time method for different time steps.}
	\label{fig:con_error}
\end{figure}

Next, we apply our method to a benchmark problem named heating a
torus~\cite{dziuk2013finite}. Let 
\[
\Ga = \left\{\bfx\in\R^3\bigg|\left(\sqrt{x_1^2+x_2^2}-1\right)^2+x_3^2=\frac{1}{16}\right\},
\]
with the right-hand side being a regularized version of the characteristic function 
\[
f(\bfx,t)=100\chi_G(\bfx),\qquad \bfx\in\Ga,
\]
where $x\in\Ga||\bfx-(0,1,0)|\leq 0.25\} $ and $ \chi_G $ is the characteristic 
function on $ G $. The initial condition is set as $u=0$. 
We generate $50,000$ collocation points by using LHS as mentioned above. The 
results are shown in Figure \ref{fig:ht}.
\begin{figure}[thbp]
	\centering
  \subfigure[Time: $0.0$]{
			\includegraphics[width=0.48\textwidth]{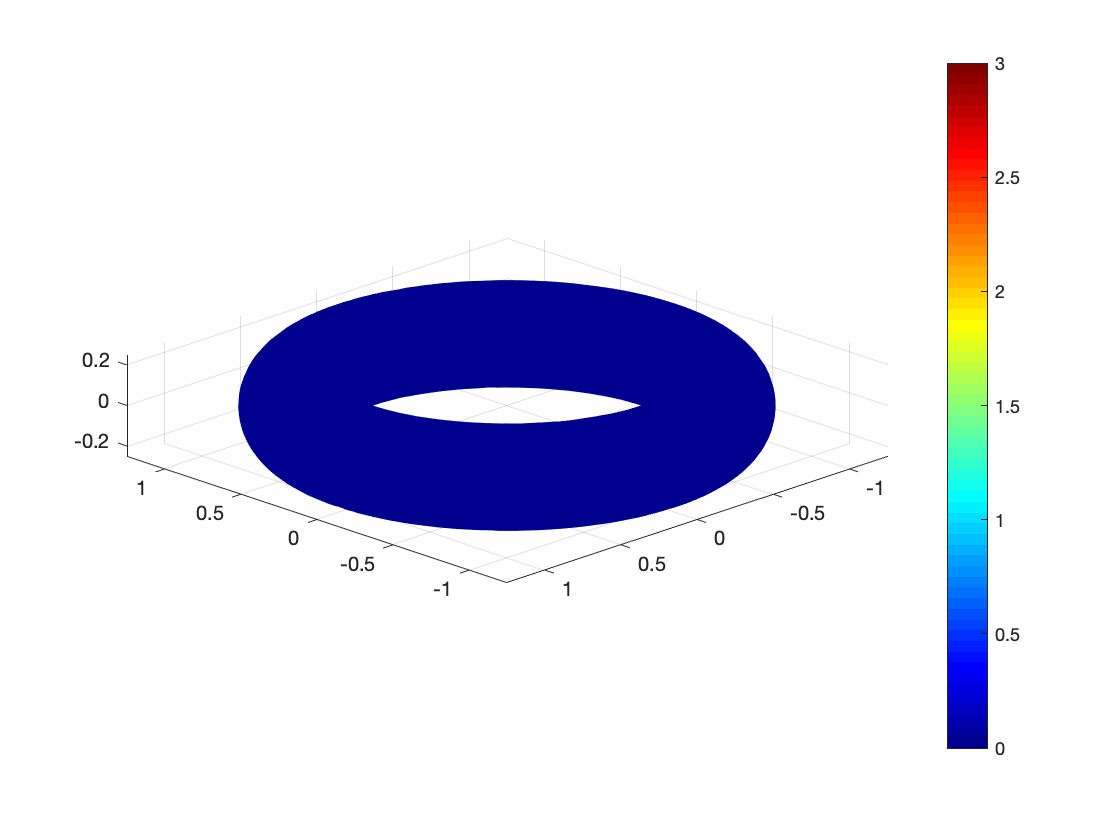} }
  \subfigure[Time: $0.75$]{
			\includegraphics[width=0.48\textwidth]{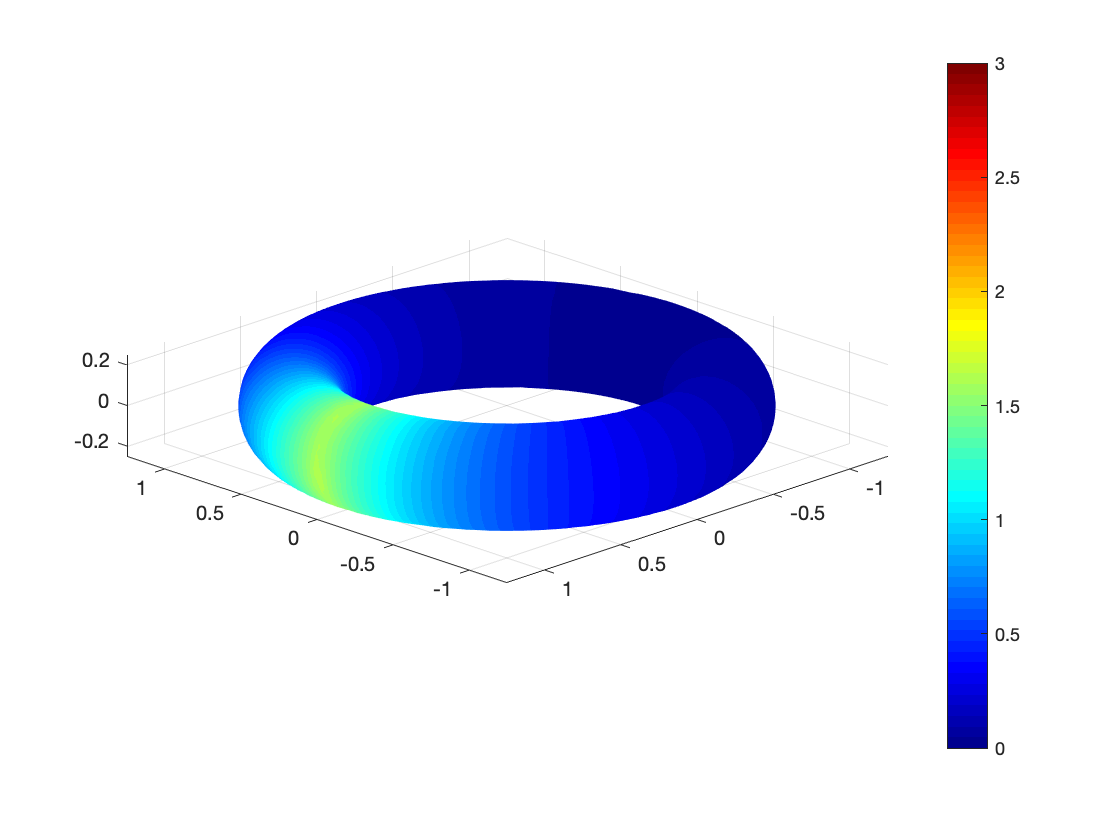} } \\
  \subfigure[Time: $1.5$]{
			\includegraphics[width=0.48\textwidth]{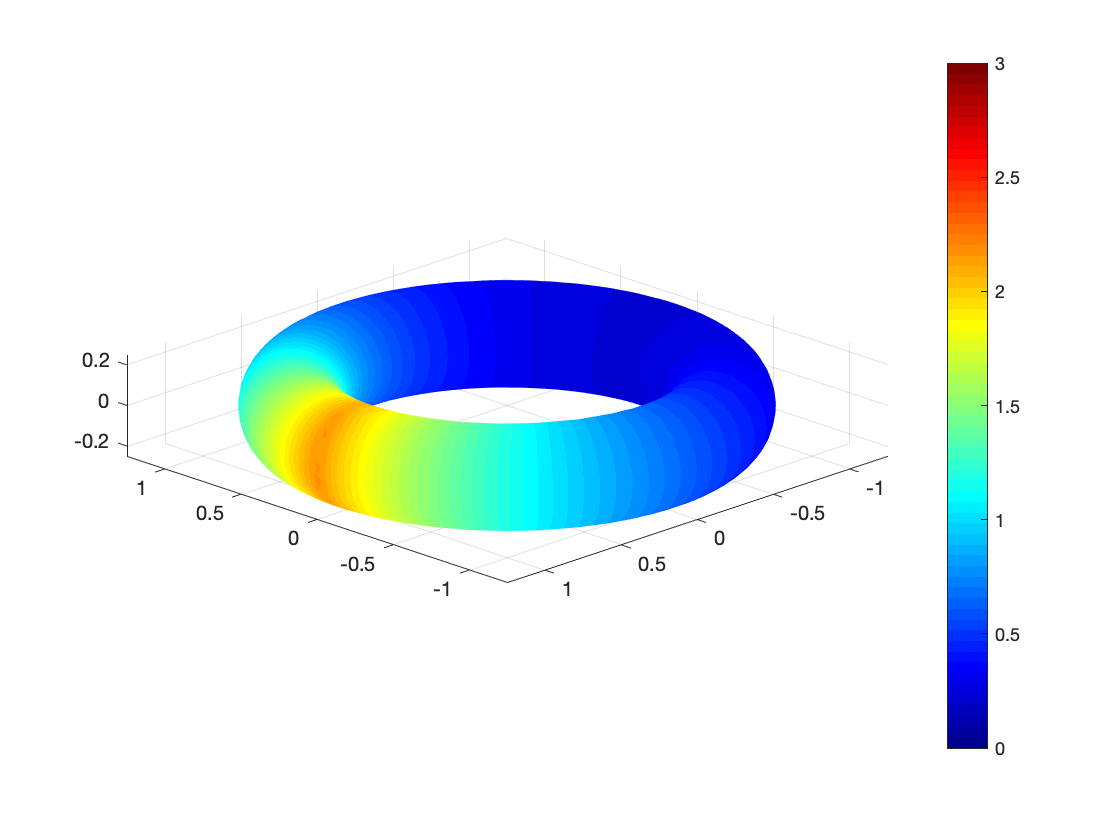} }
  \subfigure[Time: $2.25$]{
			\includegraphics[width=0.48\textwidth]{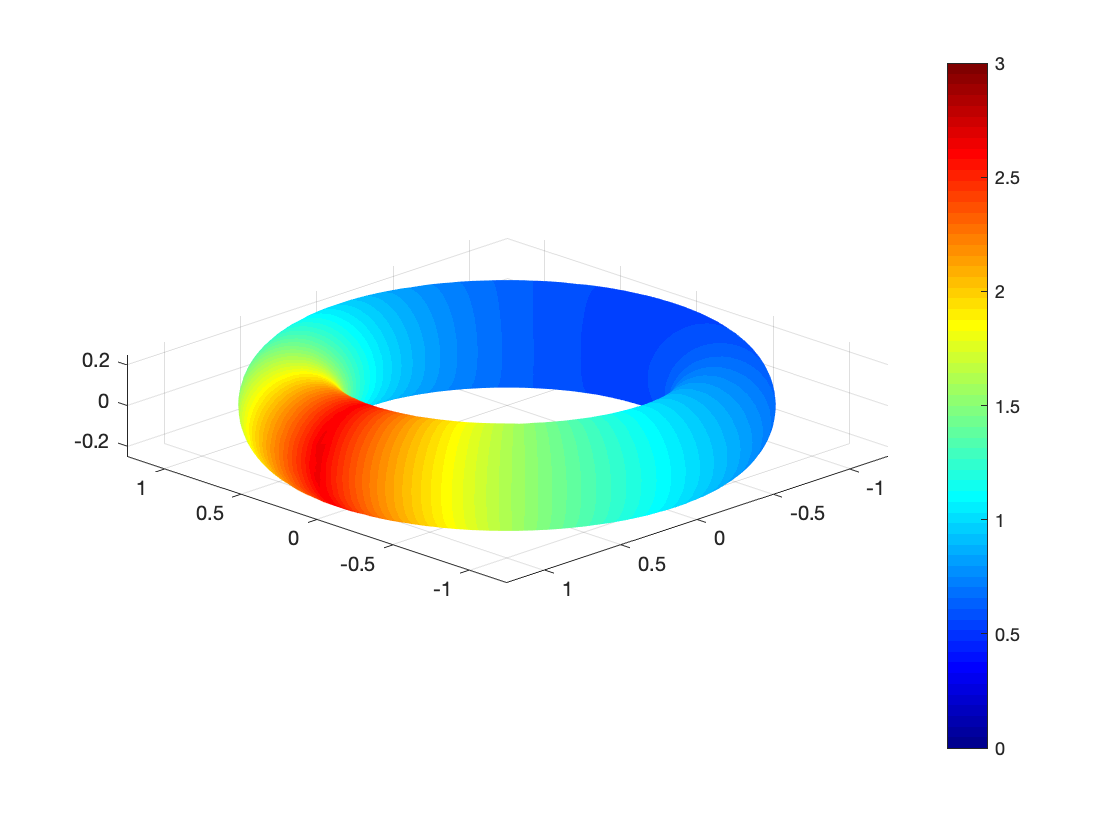} } \\
  \subfigure[Time: $3.0$]{
		\includegraphics[width=0.48\textwidth]{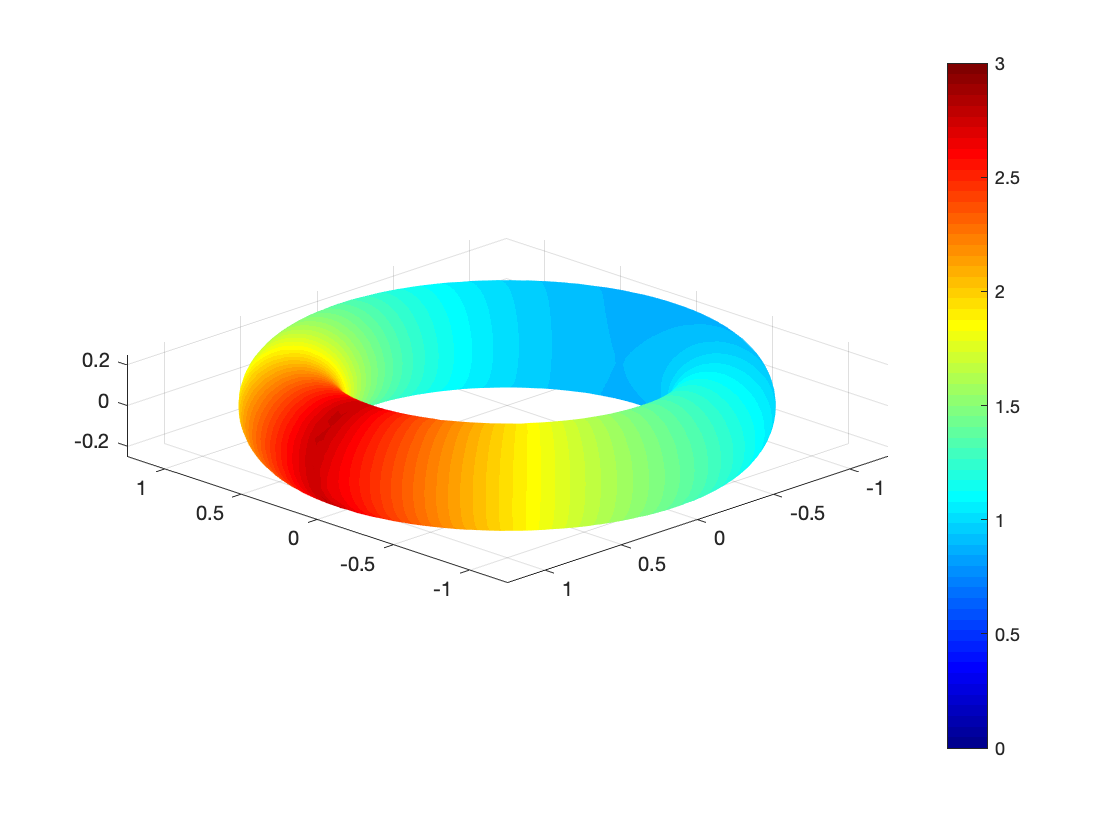} }
	\caption{Heating a torus. The color indicates the magnitude of the solution. Viewpoint: $ (-1,1,0.5) $.}\label{fig:ht}
\end{figure}

\subsection{Discrete-Time Method}
In this subsection, we test the discrete-time method. We first set $ 0.5=T<1 $
and reconsider (\ref{num_eqn})-(\ref{num_ini}) on $ \Ga=\S^2 $. For the
discrete-time method, we only need collocation points on $ \Ga $ as the time
derivative has already been discretized by the Runge-Kutta method. Therefore, 
$500$ Fibonacci lattices on $ \S^2 $ has been generated as the collocation
points. This number is $ 100 $ times smaller than that of the continuous-time 
method. Thus, it saves memory and computational time significantly. A $ 100 $ 
stages Runge-Kutta method has been used. The theoretical result indicates that 
the time error of this $0.5^{200}\approx 6.223015278\times 10^{-61} $, which is
much smaller than machine precision. We set the exact solution as $ 
u(\bfx,t)=x_1x_2x_3\exp(t) $.

The result of $ u(\bfx,T) $ is shown in Figure \ref{fig:dis05}. As we can see,
the relative error is of the same order as the continuous-time method, because 
in this case, the spatial PINN error dominates the total error.

\begin{figure}
	\centering
	\includegraphics[width=0.5\textwidth]{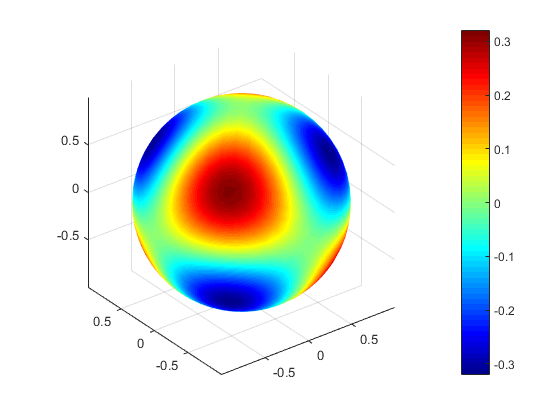}
	\caption{Discrete-time method with $ T=0.5$. $ Err=1.467495e-02 $.}
	\label{fig:dis05}
\end{figure}

Next, we set $ 3=T>1 $ to verify this method works after the time rescaling. We choose $ \widetilde{T}=0.5 $ and transform $ [0,T] $ to $ [0,\widetilde{T}] $, then we apply our discrete-time method to (\ref{eqn_trans}). We set our exact solution $ u(\bfx,t) $ as $ x_1x_2x_3\exp(t) $ and $ x_1\sin(tx_2)+x_3 $, respectively. All the other settings keep the same. The result of $ u(\bfx,T) $ is shown in Figure \ref{fig:dis30}, whose relative error is still around $ 10^{-2} $.

\begin{figure}
	\centering
  \subfigure[$u(\bfx,t)=x_1x_2x_3\exp(t)$. $Err=5.967653e-02$]{
			\includegraphics[width=0.48\textwidth]{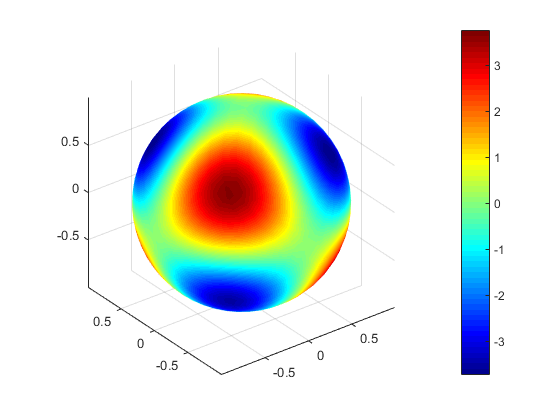} }
  \subfigure[$u(\bfx,t)=x_1\sin(tx_2)+x_3$. $Err=8.814725e-02$]{
			\includegraphics[width=0.48\textwidth]{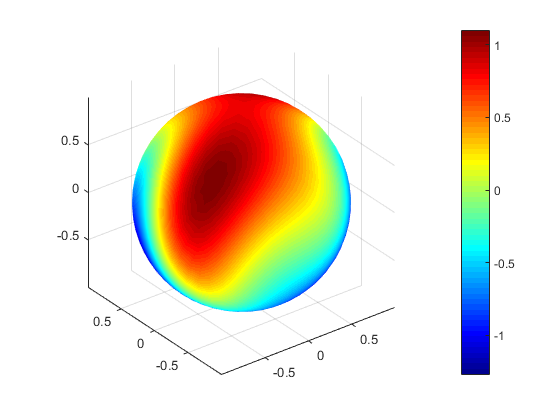}}
	\caption{Discrete-time method with $ T=3 $.}
	\label{fig:dis30}
\end{figure}
\section{Conclusion}\label{sec:con}
In this paper, we put forward the PINN method for time-evolving PDEs on
surfaces. A simplified proof of the prior estimate of the differential operator
on the surface has been shown in theorem \ref{thm}. To deal with the temporal
differential operator, we come up with the continuous-time method and
discrete-time method. The continuous-time method is more flexible and
straightforward. Although, in general, there is no restriction on the 
length of the time interval. But this method will take more sample points and
then more time to train the PINN.

On the other hand, the discrete-time method needs a relatively small collocation
points set and, hence, less training time. But it requires the time interval less 
than $1$ due to the error analysis of the Range Kutta scheme. The numerical 
experiments verified our algorithm.


\clearpage
\renewcommand{\baselinestretch}{0.8}

\end{document}